\newtheorem{theorem}{Theorem}
\newtheorem{assumption}{Assumption}
\title{\LARGE \bf
Stochastic Multi-armed Bandits with Non-stationary Rewards Generated by a Linear Dynamical System
}
\author{Jonathan Gornet, Mehdi Hosseinzadeh \IEEEmembership{Member, IEEE} and Bruno Sinopoli \IEEEmembership{Fellow, IEEE}%
\thanks{The authors are with the Department of Electrical and Systems Engineering, Washington University in St. Louis, St. Louis, MO 63130, USA (email: jonathan.gornet@wustl.edu; mhosseinzadeh@wustl.edu; bsinopoli@wustl.edu). This material was based upon work supported by awards ARO W911NF20S0009.}}
\begin{document}

\maketitle
\thispagestyle{empty}
\pagestyle{empty}

\begin{abstract}
    The stochastic multi-armed bandit has provided a framework for studying decision-making in unknown environments. We propose a variant of the stochastic multi-armed bandit where the rewards are sampled from a stochastic linear dynamical system. The proposed strategy for this stochastic multi-armed bandit variant is to learn a model of the dynamical system while choosing the optimal action based on the learned model. Motivated by mathematical finance areas such as Intertemporal Capital Asset Pricing Model proposed by Merton and Stochastic Portfolio Theory proposed by Fernholz that both model asset returns with stochastic differential equations, this strategy is applied to quantitative finance as a high-frequency trading strategy, where the goal is to maximize returns within a time period.
\end{abstract}

\section{Introduction}

The stochastic multi-armed bandit (MAB) problem, proposed by Thompson in 1933 \cite{thompson1933likelihood}, has provided a powerful modeling framework to investigate a large class of decision making problems. In MAB, a learner interacts with the environment where in each interaction, called a round, the learner chooses an action and receives a reward. The performance of policies is usually evaluated as the expectation of the cumulative difference between chosen and optimal actions, and defined as regret. In its basic formulation rewards are sampled from a stationary distribution, a very popular algorithm, called the \textit{Upper Confidence Bound} (UCB) algorithm~\cite{agrawal1995sample}, guarantees logarithmic growth of regret~\cite{auer2002finite}. 

MAB has seen different applications in several areas such as machine learning, dynamic pricing, and portfolio management. In machine learning, the MAB formulation can be used to find a set of hyperparameters to increase the performance of the learning process \cite{li2017hyperband}. This has been extended further by applying MAB to algorithm selection, where a learner searches for a high-performing algorithm to use for training \cite{gagliolo2010algorithm}. For dynamic pricing, when selling a set of products, the price needs to be set according to the current demand in order to maximize profit. To find the optimal pricing, MAB is used where the model for demand is based on a low-dimensional demand model \cite{mueller2019low} or a differential equation \cite{agrawal2021dynamic}. In portfolio selection, MAB formulation is natural in the case where the manager creates a portfolio with multiple assets \cite{shen2015portfolio,huo2017risk}. 

In the examples mentioned above, the environments that the learner interacts with has time correlations and correlations between the rewards for each action. Hyperparameter optimization can be viewed as optimizing a cost function with correlated decision variables \cite{shahriari2015taking} and the training process is dynamic \cite{li2017hyperband}. In dynamic pricing, the pricing for each product can be correlated with each other based on a low dimensional demand model \cite{mueller2019low} and the demand changes over time \cite{agrawal2021dynamic}. As for portfolio management, equations used in Intertemporal Capital Asset Pricing Model (ICAPM) proposed by Merton \cite{merton1973intertemporal} and Stochastic Portfolio Theory proposed by Fernholz \cite{fernholz1982stochastic} model asset returns using stochastic differential equations. The examples above motivate the need to investigate non-stationary stochastic MAB where the rewards are sampled from a stochastic dynamical system. Here the reward can be expressed as the inner product between the action vector and a dynamic unknown parameter vector. Previous work in non-stationary MAB studies the case where the change in magnitude of the unknown parameter vector is bounded~\cite{cheung2019learning,javanmard2017perishability} or when such vector is is sampled from a predefined set~\cite{qin2022non,Sequential_Stochastic}. In both cases the unknown parameter vector changes the distribution of the reward, making the original stochastic MAB formulation non-stationary. To the best of our knowledge, the bulk of papers in non-stationary MAB focus on the \textit{piece-wise stationary} case, where the distributions are stationary within set intervals. Papers for the \textit{piece-wise stationary} case focus on remembering-vs-forgetting trade-off using either discounting or sliding windows \cite{garivier2008upper} to forget early recorded rewards or detecting the change in the distribution to decide when to restart the learning process \cite{hartland2007change,liu2018change, cao2019nearly,mellor2013thompson}. Other papers in non-stationary MAB bound the cumulative change of the reward mean in the form of variational budget $V_T$ \cite{wei2021nonstationary,besbes2014stochastic}. However, the rewards sampled from a stochastic dynamical system change in a very different manner, making previous work not well suited. In this paper we tackle the stochastic multi-armed bandit problem when the rewards are sampled from an unknown stochastic linear dynamical system driven by Gaussian noise. Since the rewards are now dynamic, the paper introduces a methodology that focuses on finding the optimal decision while learning a model of the system. We will leverage and adapt results in~\cite{tsiamis2019finite} to learn the linear model. We then use such model to design a policy based on reward predictions. To illustrate the concept, we apply the algorithm to a simple trading example.

The framework of the paper is as follows: Section \ref{sec:Problem_Framework} reviews the stochastic multi-armed bandit and introduces a variant where the rewards are sampled from a linear dynamical system. In section \ref{sec:Modeling_the_System}, a methodology to model and predict rewards is presented. Section \ref{sec:Bandit_Strategy} then uses the model to develop a strategy to maximize cumulative reward over a horizon. Section \ref{sec:Regret} performs regret analysis and provides a theoretical upper bound for the regret of the proposed algorithm. Section \ref{sec:Numerical_Simulation} shows an application of the proposed algorithm to a simple high frequency trading example. Finally, section \ref{sec:Conclusion} provides conclusions and future directions.

\noindent \textbf{Notation}. To denote the transpose of a matrix, the notation $^\top$ is used. For norms, $||\cdot||_2$ is the $\ell_2$-norm norm for vectors. The trace of a matrix is denoted with $\mbox{Tr}(\cdot)$. For a normal distribution, notation $\mathcal{N}(M,S)$ is used, where $M$ is the mean and $S$ is the covariance of the distribution. Inner product is $\langle a,b \rangle \triangleq a^\top b$ where $a, b \in \mathbb{R}^d$. The term $\mathcal{O}(\cdot)$ is big-O notation. For the equality 
    \begin{equation}
        g(n) = \mathcal{O}(f(n)), \nonumber
    \end{equation}
    this implies that for some constant $M > 0$ and $n_0 \geq 0$, $g(n)\leq Mf(n)$ for all $n>n_0$ \cite{10.1145/1008328.1008329}. 

\section{Problem Framework}\label{sec:Problem_Framework}

Suppose that for $k$ given actions $c_a \in \mathcal{A} \subset \mathbb{R}^{d}$, the reward $X_t$ is sampled from the following stochastic linear dynamical system
\begin{equation}\label{eq:Linear_System}
    \begin{cases}
        z_{t+1} & = \Gamma z_t + \xi_t, z_0 \sim \mathcal{N}(\mu_0,\Sigma_0) \\
        \theta_t & = C_\theta z_t + \phi_t \\
        X_t & = \langle c_a ,z_t \rangle + \mu_a + \eta_t
    \end{cases}, 
\end{equation}
where the unknown parameter vector $z_t \in \mathbb{R}^d$ is the state of the system. For each round $t = 1,\dots,n$, $n > d$, the learner observes the reward $X_t \in \mathbb{R}$ based on the chosen action and the context $\theta_t \in \mathbb{R}^m$. The context $\theta_t$ is a value that the learner always observes and its observation matrix $C_\theta$ is constant. The processes $\xi_t \in \mathbb{R}^d$, $\phi_t \in \mathbb{R}^m$, and $\eta_t \in \mathbb{R}$ are i.i.d. normally distributed, i.e. $\xi_t \sim \mathcal{N}(\mu_\xi ,Q)$, $\phi_t \sim \mathcal{N}(\mu_\phi,R_\phi)$, and $\eta_t \sim \mathcal{N}(0,\sigma_\eta)$. The matrices $\Gamma$, $C_\theta$, $Q$, $R_\phi$, $\Sigma_0$, vectors $\mu_\xi$, $\mu_\phi$, $\mu_0$, and scalars $\sigma_\eta$, $\mu_a$ ($a = 1,\dots,k$) are assumed to be unknown. The dimension $d$ is unknown, but dimension $m$ is known as it is the dimension of the context. For notation, given that there are $k$ vectors $c_a \in \mathcal{A}$, we denote $a \in \{1,\dots,k\}$ to be which vector $c_a$ is chosen. The system has the following assumptions

\begin{assumption}\label{assum:observable}
    The matrix pair $(\Gamma ,C_\theta)$ is observable. The matrix $Q$ is positive definite.
\end{assumption}

\begin{assumption}
    The matrix $\Gamma $ is Schur, i.e. $\rho(\Gamma) < 1$. 
\end{assumption}


The goal of the learner is to maximize cumulative reward over a finite time horizon $n$. To prove the performance of the learning strategy, regret analysis is used \cite{lattimore2020bandit}. Regret is defined as the cumulative, over all rounds, expected difference between the highest reward (denoted as $X_t^*$) and the reward for the chosen action at time $t$, i.e. 
\begin{equation}\label{eq:pseudo-regret}
    R_n = \sum_{t = 1}^n \mathbb{E}[X_t^* - X_t]. 
\end{equation}

\section{Modeling the System from Data}\label{sec:Modeling_the_System}

If the learner knew \eqref{eq:Linear_System}, then the Kalman filter could be used to predict the state $z_t$ and consequently the reward $X_t$ for each action $a \in \{1,\dots,k\}$:
\begin{equation}\label{eq:Kalman_Filter}
    \begin{array}{ll}
        \hat{z}_{t+1|t} & = \Gamma \hat{z}_{t|t} + \mu_\xi, \quad P_{t+1|t} = \Gamma P_{t|t} \Gamma^\top + Q \\
        K_t & = P_{t|t-1} C_\theta^\top (C_\theta P_{t|t-1}C_\theta + R_\phi)^{-1} \\
        \hat{z}_{t|t} & = \hat{z}_{t|t-1} + K_t (\theta_t - C_\theta \hat{z}_{t|t-1} - \mu_\phi), \\
        P_{t|t} & = P_{t|t-1} - K_t C_\theta P_{t|t-1}, \\
        \hat{X}_{t|t-1} & = \langle c_a,\hat{z}_{t|t-1}\rangle + \mu_a
    \end{array}, 
\end{equation}
where $\hat{z}_{t|t} \triangleq \mathbb{E}[z_t|\mathcal{F}_t]$ and $\mathcal{F}_t$ is the sigma algebra generated by previous contexts $\theta_0,\dots,\theta_t$. Since the Kalman gain matrix $K_t$ converges thanks to assumption \ref{assum:observable}, then using the steady-state Kalman filter is reasonable where the prediction of the state $\hat{z}_{t+1|t}$ and the estimate of the state $\hat{z}_{t|t}$ can be combined into one equation
\begin{equation}\label{eq:Steady_Kalman_Filter}
    \begin{array}{ll}
        \hat{z}_{t+1} & = \Gamma \hat{z}_{t} + \mu_\xi + \Gamma K (\theta_t - C_\theta \hat{z}_{t} - \mu_\phi)\\
        \hat{X}_t & = \langle c_a,\hat{z}_{t}\rangle + \mu_a
    \end{array}, 
\end{equation}
\begin{align}
    K & = PC_\theta^\top (C_\theta PC_\theta^\top + R_\phi)^{-1}, \nonumber \\
    P & = \Gamma  P\Gamma ^\top + Q - \Gamma PC_\theta^\top (C_\theta PC_\theta^\top + R_\phi)^{-1} C_\theta P\Gamma^\top,\nonumber \\
    \hat{z}_t & \triangleq \hat{z}_{t|t-1}, \nonumber \\
    \hat{X}_t & \triangleq \hat{X}_{t|t-1}. \nonumber
\end{align}

Since using the steady-state Kalman filter prediction $\hat{X}_t$ provides a good prediction of $X_t$, then learning the steady-state Kalman filter will intuitively provide a good prediction of the reward $X_t$ for each action $a \in \{1,\dots,k\}$. Therefore, to learn the steady-state Kalman filter, a variation of \cite{tsiamis2019finite} is used. Let $s> 0$ be the horizon length of how far we look into the past. We define a matrix $G_a$ for each $a \in \{1,\dots,k\}$ and a vector $\Theta_t$ below
\begin{align}
    G_{a} & \triangleq \left[\begin{matrix}
        c_a^\top (\Gamma -\Gamma K C_\theta)^{s-1} \Gamma K & \dots \end{matrix}\right.  \nonumber \\
        & \quad\quad \left.\begin{matrix}
             c_a^\top\Gamma K & \sum_{\tau = 1}^s \langle c_a, \Gamma^\tau (\mu_\xi - \mu_\phi) \rangle + \mu_a
        \end{matrix}\right] \in \mathbb{R}^{1 \times (ms+1)}, \nonumber \\
    \Theta_{t} & \triangleq \begin{bmatrix}
        \theta_{t-s}^\top & \dots & \theta_{t-1}^\top & 1
    \end{bmatrix}^\top \in \mathbb{R}^{ms+1 \times 1} .
\end{align}

Using $G_a$ and $\Theta_t$ defined above, it can be shown that the reward $X_t$ has the following expression 
\begin{equation}
    X_{t} = G_{a}\Theta_{t} + \langle c_a, (\Gamma -\Gamma K C_\theta)^s \hat{z}_{t-s}\rangle +  \varepsilon_{a;t}, \label{eq:Matrix_Form}
\end{equation}
where
\begin{align}
    \varepsilon_{a;t} & \triangleq X_t - \hat{X}_t, \nonumber \\
    & = \langle c_a, z_t - \hat{z}_t \rangle + \eta_t, \nonumber \\
    & \sim \mathcal{N}(0,c_a^\top P c_a + \sigma_\eta). \nonumber
\end{align}

Note that since $\Gamma -\Gamma K C_\theta$ is Schur by construction, then the magnitude of the term $\langle c_a, (\Gamma -\Gamma K C_\theta)^s \hat{z}_{t-s}\rangle$ decreases as $s$ increases. If given a set of time instants $\mathcal{T}_{a} = \{t_1,\dots,t_{N_a}\}$
then \eqref{eq:Matrix_Form} can be rearranged in the following form 
\begin{multline}\label{eq:difference_form}
    \begin{bmatrix}
        X_{t_1}^\top\\
        \vdots \\
        X_{t_{N_a}}^\top
    \end{bmatrix}^\top = G_a \begin{bmatrix}
        \Theta_{t_1}^\top \\
        \vdots \\
        \Theta_{t_{N_a}}^\top
    \end{bmatrix}^\top \\ + \begin{bmatrix}
    \langle c_a, (\Gamma -\Gamma K C_\theta)^s \hat{z}_{t_1-s}\rangle^\top + \varepsilon_{a;t_1}^\top \\
    \vdots \\
    \langle c_a, (\Gamma -\Gamma K C_\theta)^s \hat{z}_{t_{N_a}-s}\rangle^\top + \varepsilon_{a;t_{N_a}}^\top
    \end{bmatrix}^\top. 
\end{multline}

A regularized least squares estimate for \eqref{eq:difference_form} is
\begin{align}
    \hat{G}_{a} & = \begin{bmatrix}
        X_{t_1}^\top\\
        \vdots \\
        X_{t_{N_a}}^\top
    \end{bmatrix}^\top \begin{bmatrix}
        \Theta_{t_1}^\top \\
        \vdots \\
        \Theta_{t_{N_a}}^\top
    \end{bmatrix} \left( \lambda I + \begin{bmatrix}
        \Theta_{t_1}^\top \\
        \vdots \\
        \Theta_{t_{N_a}}^\top
    \end{bmatrix}^\top \begin{bmatrix}
        \Theta_{t_1}^\top \\
        \vdots \\
        \Theta_{t_{N_a}}^\top
    \end{bmatrix}
    \right)^{-1}, \nonumber \\
    & = \sum_{\tau \in \mathcal{T}_a} X_\tau \Theta_{\tau}^\top V_{a}^{-1}. \label{eq:identify}
\end{align}
where $V_a$ is defined to be
\begin{equation}
    V_{a} \triangleq \lambda I + \sum_{\tau \in \mathcal{T}_a} \Theta_{\tau} \Theta_{\tau}^\top, \label{eq:identify_V}
\end{equation}
and $\lambda \geq 0$ is the regularization parameter. Note that the $\lambda I$ is added so that that $V_{a}$ is positive definite and therefore invertible. 

\section{Bandit Strategy}\label{sec:Bandit_Strategy}

The action the learner ought to choose is the action that the learner predicts will output the highest reward. This prediction is based on the matrix $\hat{G}_a$, which is an estimate of the matrix $G_a$. Therefore, the learner should focus on learning $\hat{G}_a$ for each action $a \in \{1,\dots,k\}$ at the beginning and then choose an action based on $\hat{G}_a$ after the initial phase. 

The proposed strategy the learner will use is the following. The parameter to set is $s$, where a larger $s$ value decreases the bias term $\langle c_a, (\Gamma -\Gamma K C_\theta)^s \hat{z}_{t-s}\rangle$ in \eqref{eq:Matrix_Form} which impacts the identification error $||\hat{G}_a - G_a||_2$. At the start of the algorithm, the learner will cycle through each action $a \in \{1,\dots,k\}$ round $t = 1$ to round $t = ks$. The learner will start computing $\hat{G}_a$ once $t > s$. After round $t = ks$, the learner will choose the action $a \in \{1,\dots,k\}$ that has the largest $\hat{G}_a \Theta_t$ value and update $\hat{G}_a$. 

The proposed Algorithm \ref{alg:cap}, inspired by Explore-Then-Commit \cite{lattimore2020bandit}, is summarized below.   
\begin{algorithm}
    \caption{Systems-Based Explore-Then-Exploitation (\textbf{SB-ETC})}\label{alg:cap}
    \begin{algorithmic}
    \STATE $t \gets 1$
    \FOR{$a \in \{1,\dots,k\}$}
        \STATE $S_a \gets 0$
        \STATE $\hat{G}_{a} \gets \mathbf{0}_{1 \times ms+1}$
        \STATE $\mathcal{T}_a \gets \{\}$
    \ENDFOR
    \WHILE{ $t \leq n$ }
        \IF{ $t > ks$ }
        \STATE $a \gets \underset{a\in \{1,\dots,k\}}{\arg\max} \textbf{ } \hat{G}_{a} \Theta_{t}$
        \ELSE
        \STATE $a \gets \underset{a\in \{1,\dots,k\}}{\arg\max} \textbf{ } \frac{1}{S_a}$
        \STATE $S_a \gets S_a + 1$
        \ENDIF
        \IF{ $t > s$ }
        \FOR{ $a \in \{1,\dots,k\}$ }
        \STATE Update $\mathcal{T}_a$
        \STATE Learn $\hat{G}_{a}$ based on \eqref{eq:identify} and \eqref{eq:identify_V}
        \ENDFOR
        \ENDIF
        \STATE Sample $(\theta_t,X_t)$ based on \eqref{eq:Linear_System}
        \STATE $t \gets t + 1$
    \ENDWHILE
\end{algorithmic}
\end{algorithm}


\section{Regret Analysis of \textbf{SB-ETC}}\label{sec:Regret}
The following theorem below provides a bound for regret defined in \eqref{eq:pseudo-regret}. 

\begin{theorem}\label{theorem:regret}
    Given a failure rate of $\delta \in (0,1)$, regret as in \eqref{eq:pseudo-regret} has the following bound with a probability of at least $1-\delta$: 
    \begin{multline}
        R_n \leq \sum_{t=1}^{ks}\sum_{a \neq a^*} \mathbb{E}[\langle \Delta c_a, z_t \rangle + \Delta \mu_a] \\ + \sum_{t=ks+1}^n\sum_{a \neq a^*} \mathbb{E}_{z}[\langle \Delta c_a, z_t \rangle + \Delta \mu_a|a]\cdot \\\min\Bigg\{B_a\mathbb{E}_{\Theta}\Bigg[\frac{||\Theta_{t}||_2}{|\Delta G_{a}\Theta_{t}|}\Bigg],1\Bigg\}, \label{eq:regret_bound}
    \end{multline}
    where $\Delta c_a$, $\Delta \mu_a$ and $\Delta G_{a}$ are defined to be 
    \begin{align}
        \Delta c_a & \triangleq c_{a^*} - c_a, \label{eq:delta_a}\\
        \Delta \mu_a & \triangleq \mu_{a^*} - \mu_{a}, \label{eq:delta_mu} \\
        \Delta G_{a} & \triangleq G_{a^*} - G_{a}. \label{eq:delta_G}
    \end{align}
    and $B_a$ is a bound such that with a probability of at least $1-\delta$:
    \begin{equation}\label{eq:event_1}
        ||\hat{G}_{a} - G_{a}||_2 + ||G_{a^*} - \hat{G}_{a^*}||_2 \leq B_a.
    \end{equation}
\end{theorem}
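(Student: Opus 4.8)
The plan is to split the horizon at $t = ks$ and treat the pure-exploration rounds ($t \le ks$) and the commit rounds ($t > ks$) separately, since the regret mechanism differs entirely in the two regimes. For the exploration phase I would first note that the additive noise terms in $X_t^*$ and $X_t$ are zero-mean and independent of the action choice, so that the expected per-round regret conditioned on playing action $a$ equals $\mathbb{E}[\langle \Delta c_a, z_t\rangle + \Delta \mu_a]$ with $\Delta c_a, \Delta\mu_a$ as in \eqref{eq:delta_a}--\eqref{eq:delta_mu}. Because $a^*$ is the action of largest expected reward, each such expected gap is nonnegative, so replacing the single played gap at round $t$ by the full sum $\sum_{a\neq a^*}\mathbb{E}[\langle \Delta c_a, z_t\rangle + \Delta\mu_a]$ yields a valid (if loose) upper bound; summing over $t = 1,\dots,ks$ produces the first term of \eqref{eq:regret_bound}.

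The substance of the argument is the commit phase. Here I would write the per-round regret as $\mathbb{E}[X_t^* - X_t] = \sum_{a\neq a^*}\mathbb{E}[(\langle \Delta c_a, z_t\rangle + \Delta\mu_a)\mathbf{1}\{a_t=a\}]$ and factor each summand exactly as $\mathbb{E}_z[\langle \Delta c_a, z_t\rangle + \Delta\mu_a \mid a]\cdot \Pr(a_t = a)$, which already isolates the first factor appearing in the second term of \eqref{eq:regret_bound}. It then remains to bound the misselection probability $\Pr(a_t = a)$ for each $a\neq a^*$.

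To bound $\Pr(a_t=a)$ I would condition on the high-probability event in \eqref{eq:event_1}, on which $||\hat{G}_a - G_a||_2 + ||G_{a^*}-\hat{G}_{a^*}||_2 \le B_a$. Since $a$ is selected only if $\hat{G}_a\Theta_t \ge \hat{G}_{a^*}\Theta_t$, decomposing $\hat{G}_a\Theta_t - \hat{G}_{a^*}\Theta_t = -\Delta G_a\Theta_t + (\hat{G}_a - G_a)\Theta_t + (G_{a^*}-\hat{G}_{a^*})\Theta_t$ and applying Cauchy--Schwarz together with \eqref{eq:event_1} shows that selecting $a$ forces $\Delta G_a\Theta_t \le B_a||\Theta_t||_2$, i.e. $||\Theta_t||_2 / |\Delta G_a\Theta_t| \ge 1/B_a$. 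A Markov inequality on the nonnegative random variable $||\Theta_t||_2/|\Delta G_a\Theta_t|$ then gives $\Pr(a_t=a) \le B_a\,\mathbb{E}_\Theta[||\Theta_t||_2/|\Delta G_a\Theta_t|]$, and bounding any probability by $1$ supplies the $\min\{\cdot,1\}$. Substituting into the factored per-round regret and summing over $t = ks+1,\dots,n$ and over $a\neq a^*$ delivers the second term, completing the bound.

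The step I expect to be the main obstacle is making the factorization $\mathbb{E}[(\cdots)\mathbf{1}\{a_t=a\}] = \mathbb{E}_z[\cdots\mid a]\,\Pr(a_t=a)$ and the subsequent Markov step fully rigorous, because the gap $\langle \Delta c_a, z_t\rangle + \Delta\mu_a$ and the selection event $\{a_t=a\}$ are both driven by the correlated quantities $z_t$ and $\Theta_t$, and the confidence event of \eqref{eq:event_1} is itself not independent of $\Theta_t$. Care is needed to justify conditioning on \eqref{eq:event_1} and applying the deterministic implication before taking the $\Theta_t$-expectation, and to justify treating $\mathbb{E}_z[\cdot\mid a]$ and $\mathbb{E}_\Theta[\cdot]$ as separate factors rather than a single joint expectation.
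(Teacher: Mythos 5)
Your proposal is correct and follows essentially the same route as the paper's own proof: the same factorization of the per-round regret into $\mathbb{E}_z[\langle \Delta c_a, z_t\rangle + \Delta\mu_a \mid a]\,\mathbb{P}[a]$, the same reduction of $\mathbb{P}[a]$ to the misselection event $\{\hat{G}_a\Theta_t \ge \hat{G}_{a^*}\Theta_t\}$, the same Cauchy--Schwarz step under the confidence event \eqref{eq:event_1}, and the same Markov-inequality bound yielding the $\min\{\cdot,1\}$ factor. The rigor concerns you flag at the end (separating the $z$- and $\Theta$-expectations, and conditioning on the confidence event) are present in the paper's argument as well, so your outline neither deviates from nor falls short of what the paper actually does.
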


\begin{proof}
    Using the law of iterated expectations \cite{wooldridge2010econometric}, the instanteneous regret for one round $t = ks+1,\dots,n$ is 
    \begin{align}
        \mathbb{E}[X_t^* - X_t] & = \mathbb{E}[\langle \Delta c_a, z_t \rangle + \Delta \mu_a] \nonumber \\
        & = \mathbb{E}_{a}[\mathbb{E}_{z}[\langle \Delta c_a, z_t \rangle + \Delta \mu_a|a]] \nonumber \\
        & = \sum_{a = 1}^k \mathbb{E}_{z}[\langle \Delta c_a, z_t \rangle + \Delta \mu_a|a]\mathbb{P}[a], \label{eq:expectation_law}
    \end{align}
    where $\Delta c_a$ and $\Delta \mu_a$ are defined in \eqref{eq:delta_a} and \eqref{eq:delta_mu}. In the following, we will provide an upper bound for $\mathbb{P}[a]$. Consider the following event $\mathcal{E}_a$
    \begin{equation}\label{eq:event_a}
        \mathcal{E}_a \triangleq \{\hat{G}_{a} \Theta_{t} \geq \hat{G}_{a^*}\Theta_{t}\},
    \end{equation}
    which implies that modeling error leads to selecting an action rather than the optimal one. Note that $\mathbb{P}[a]$ is the following probability
    \begin{align}
        \mathbb{P}[a] & = \mathbb{P} \Big[\cap_{i \neq a} \{\hat{G}_{a} \Theta_{t} \geq \hat{G}_{i}\Theta_{t}\}\Big] \nonumber \\
        & = \mathbb{P} \Big[\cap_{i \neq a,a^*} \{\hat{G}_{a} \Theta_{t} \geq \hat{G}_{i}\Theta_{t}\} \cap \{\hat{G}_{a} \Theta_{t} \geq \hat{G}_{a^*}\Theta_{t}\}\Big],
    \end{align}
    implying that $\mathbb{P}[a] \leq \mathbb{P}[\mathcal{E}_a]$. Adjusting the inequality in $\mathcal{E}_a$ provides the following 
    \begin{align}
        \hat{G}_{a} \Theta_{t} & \geq \hat{G}_{a^*}\Theta_{t} \nonumber \\
        G_{a}\Theta_{t} + (\hat{G}_{a} - G_{a})\Theta_{t} & \geq G_{a^*}\Theta_{t} \nonumber \\
        & \quad\quad - (G_{a^*} - \hat{G}_{a^*})\Theta_{t}. \label{eq:event_inequality}
    \end{align}
    
    Let $\Delta G_{a}$ be as in \eqref{eq:delta_G}. Then \eqref{eq:event_inequality} can be written as follows: 
    \begin{align}
        \Delta G_{a}\Theta_{t} & \leq (\hat{G}_{a} - G_{a})\Theta_{t}
        \nonumber \\
        & \quad\quad + (G_{a^*} - \hat{G}_{a^*})\Theta_{t}, \nonumber 
    \end{align}
    which implies that
    \begin{align}
        \Delta G_{a}\Theta_{t}& \leq ||\hat{G}_{a} - G_{a}||_2||\Theta_{t}||_2 \nonumber \\
        & \quad\quad + ||G_{a^*} - \hat{G}_{a^*}||_2||\Theta_{t}||_2 . \label{eq:inequality_1}
    \end{align}
    
    At this stage, we use \eqref{eq:identify} and \eqref{eq:identify_V} to express $\hat{G}_a$ for any actions as follows:
    \begin{align}
        \hat{G}_{a} & = \sum_{\tau \in \mathcal{T}_{a}} G_{a}\Theta_{\tau} \Theta_{\tau}^\top V_{a}^{-1} \nonumber \\
        &\quad\quad + \langle c_a, (\Gamma -\Gamma K C_\theta)^s \hat{z}_{\tau-s}\rangle \Theta_{\tau}^\top V_{a}^{-1} \nonumber \\
        & \quad\quad + \epsilon_{a;\tau} \Theta_{\tau}^\top V_{a}^{-1}, \nonumber \\
        & = G_{a} (V_{a} - \lambda I) V_{a}^{-1}   \nonumber\\ 
        & \quad\quad + \sum_{\tau \in \mathcal{T}_{a}} \langle c_a, (\Gamma -\Gamma K C_\theta)^s \hat{z}_{\tau-s}\rangle \Theta_{\tau}^\top V_{a}^{-1} \nonumber \\ 
        & \quad\quad + \sum_{\tau \in \mathcal{T}_{a}} \epsilon_{a;\tau} \Theta_{\tau}^\top V_{a}^{-1}, \nonumber \\ 
        & = G_{a} - \lambda G_{a} V_{a}^{-1} \nonumber \\
        & \quad\quad + \sum_{\tau \in \mathcal{T}_{a}} \langle c_a, (\Gamma -\Gamma K C_\theta)^s \hat{z}_{\tau-s}\rangle \Theta_{\tau}^\top V_{a}^{-1} \nonumber \\ 
        & \quad\quad + \sum_{\tau \in \mathcal{T}_{a}} \epsilon_{a;\tau} \Theta_{\tau}^\top V_{a}^{-1}. \nonumber
    \end{align}
    
    Thus, the $\ell$-2 norm $||\hat{G}_{a} - G_{a}||_2$ is
    \begin{align}
        \hat{G}_{a} - G_{a} & = - \lambda G_{a}V_{a}^{-1} \nonumber \\ 
        & \quad + \sum_{\tau \in \mathcal{T}_{a}} \langle c_a, (\Gamma -\Gamma K C_\theta)^s \hat{z}_{\tau-s}\rangle\Theta_{\tau}^\top V_{a}^{-1}\nonumber \\ 
        & \quad + \sum_{\tau \in \mathcal{T}_{a}} \epsilon_{a;\tau}\Theta_{\tau}^\top V_{a}^{-1}, \nonumber \\
        ||\hat{G}_{a} - G_{a}||_2 & \leq ||\lambda G_{a}V_{a}^{-1}||_2 \nonumber \\ 
        & \quad + \Bigg|\Bigg|\sum_{\tau \in \mathcal{T}_{a}} \langle c_a, (\Gamma -\Gamma K C_\theta)^s \hat{z}_{\tau-s}\rangle\Theta_{\tau}^\top V_{a}^{-1}\Bigg|\Bigg|_2 \nonumber \\ 
        & \quad + \Bigg|\Bigg|\sum_{\tau \in \mathcal{T}_{a}} \epsilon_{a;\tau}\Theta_{\tau}^\top V_{a}^{-1}\Bigg|\Bigg|_2 . 
    \end{align}
    
    Since $\rho(\Gamma) < 1$, the estimate $\langle c_a, (\Gamma -\Gamma K C_\theta)^s \hat{z}_{\tau-s}\rangle$ is bounded. For the product $\epsilon_{a;t}\Theta_{t}^\top V_{a}^{-1}$, based on theorem 1 in \cite{NIPS2011_e1d5be1c}, since $\epsilon_{a;t}$ is conditionally $c_a^\top P c_a + \sigma_\eta$-sub-Gaussian and $\Theta_{t}$ is measurable, then given a failure rate $\delta \in (0,1)$, this term is bounded with a probability of at least $1-\delta$. Therefore, \eqref{eq:event_1} is satisfied with a probability of at least $1-\delta$. Denote \eqref{eq:event_1} as $\mathcal{E}_1$. Now assuming $\mathcal{E}_1$ is given, the inequality \eqref{eq:inequality_1} can be rewritten as
    \begin{align}
        B_a ||\Theta_{t}||_2 & \geq \Delta G_{a}\Theta_{t} \nonumber \\
        \Rightarrow \frac{||\Theta_{t}||_2}{\Delta G_{a}\Theta_{t}} & \geq \frac{1}{B_a} \nonumber \\
        \Rightarrow \frac{||\Theta_{t}||_2}{|\Delta G_{a}\Theta_{t}|} & \geq \frac{1}{B_a}. \label{eq:inequality_2}
    \end{align}
    
    Let \eqref{eq:inequality_2} be denoted as $\mathcal{E}_2|\mathcal{E}_1$. Based on the Markov inequality \cite{boucheron2013concentration}, the following concentration bound is given 
    \begin{equation}\label{eq:upper_e_a}
        \mathbb{P}[\mathcal{E}_2|\mathcal{E}_1] \leq \min\Bigg\{B_a\mathbb{E}_{\Theta}\Bigg[\frac{||\Theta_{t}||_2}{|\Delta G_{a}\Theta_{t}|}\Bigg],1\Bigg\}.
    \end{equation}
    
    Note \eqref{eq:event_a} and \eqref{eq:inequality_2} can be rewritten the following way
    \begin{align}
        \mathcal{E}_2|\mathcal{E}_1 & = \Big\{B_a \frac{||\Theta_{t}||_2}{|\Delta G_{a}\Theta_{t}|} \geq 1 \Big\}, \nonumber \\
        \mathcal{E}_a & = \Big\{\frac{(\hat{G}_{a} - G_{a})\Theta_{t} + (G_{a^*} - \hat{G}_{a^*})\Theta_t}{\Delta G_a\Theta_t} \geq 1\Big\}. 
    \end{align}
    
    Note that $\mathbb{P}[\mathcal{E}_a] \leq \mathbb{P}[\mathcal{E}_2|\mathcal{E}_1]$ is true if the following inequality is true. 
    \begin{equation}
        B_a \frac{||\Theta_{t}||_2}{|\Delta G_{a}\Theta_{t}|} \geq\frac{(\hat{G}_{a} - G_{a})\Theta_{t} + (G_{a^*} - \hat{G}_{a^*})\Theta_t}{\Delta G_a\Theta_t}. \label{eq:events}
    \end{equation}
    
    Since \eqref{eq:events} is satisfied with a probability of at least $1-\delta$ based on \eqref{eq:event_1}, then $\mathbb{P}[\mathcal{E}_a] \leq \mathbb{P}[\mathcal{E}_2|\mathcal{E}_1]$ with a probability of at least $1-\delta$. Therefore, \eqref{eq:expectation_law} has the following upper bound
    \begin{align}
        \mathbb{E}[X_t^* - X_t] & = \sum_{a=1}^k \mathbb{E}_z[\langle \Delta c_a,z_t \rangle + \Delta \mu_a |a] \mathbb{P}[a] \nonumber \\
        & \leq \sum_{a\neq a^*} \mathbb{E}_z[\langle \Delta c_a,z_t \rangle + \Delta \mu_a |a] \mathbb{P}[\mathcal{E}_a] \nonumber \\
        & \leq \sum_{a\neq a^*} \mathbb{E}_z[\langle \Delta c_a,z_t \rangle + \Delta \mu_a |a] \mathbb{P}[\mathcal{E}_2|\mathcal{E}_1]. \nonumber
    \end{align}

    Using \eqref{eq:upper_e_a}, this provides the upper-bound for \eqref{eq:expectation_law} with a probability of at least $1-\delta$
    \begin{multline}\label{eq:instant_regret_bound}
        \mathbb{E}[X_t^* - X_t] \leq \sum_{a \neq a^*} \mathbb{E}_{z}[\langle \Delta c_a, z_t \rangle + \Delta \mu_a |a]\cdot \\\min\Bigg\{B_a\mathbb{E}_{\Theta}\Bigg[\frac{||\Theta_{t}||_2}{|\Delta G_{a}\Theta_{t}|}\Bigg],1\Bigg\}.
    \end{multline}
    
    Summing \eqref{eq:instant_regret_bound} over $t$ rounds gives the bound \eqref{eq:regret_bound}.
    
\end{proof}

Based on theorem \ref{theorem:regret}, the regret performance is based on the bound for model error $B_a$. Therefore, if the model is known, then $B_a=0$ sets the bound to be zero after $t = ks$ which is reasonable. Note that $B_a$ is based on the number of times action $a$ is chosen, which can affect the upper bound. For now, the exploration period is set to $ks$ so that the learner has $s$ samples for each action $a \in \{1,\dots,k\}$. Future work will focus on what is a more effective length for the exploration period. 

\section{Application to trading}\label{sec:Numerical_Simulation}

This section will exemplify the use of the proposed framework. Let there be two stocks, $i = 1,2$ a trader is interested in. The trader can either buy then sell either stock 1 or 2, or refrain from trading for each round $t = 1,\dots,10^4$. Figure \ref{figure:strategy} provides an example timeline of the trader's strategy. 
\begin{figure}[thpb]
    \centering
    \includegraphics[width=0.4\textwidth]{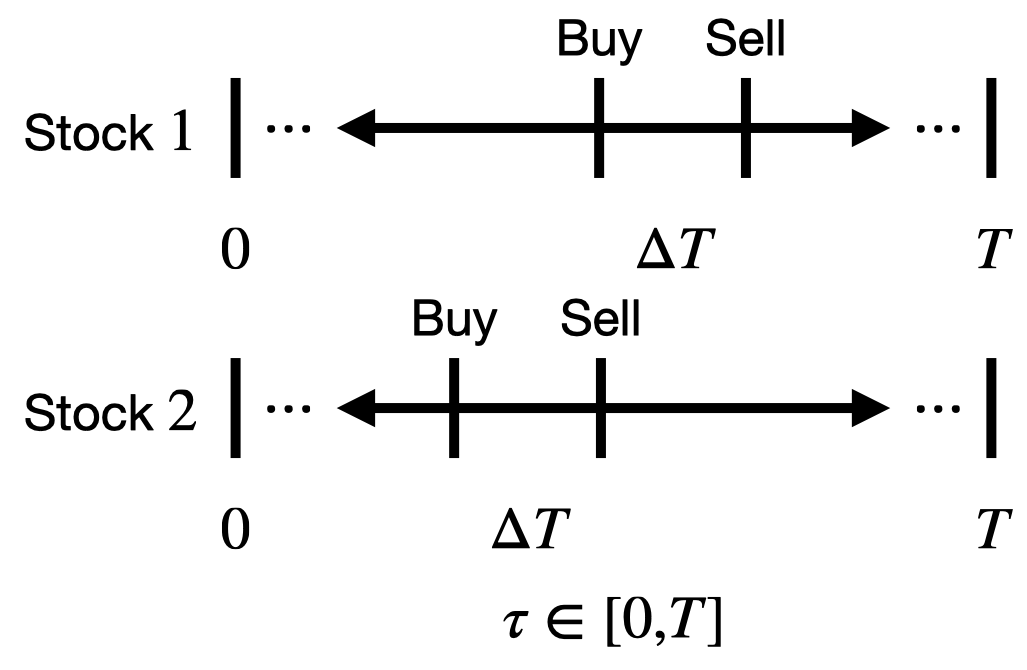}
    \caption{Example timeline of trader's strategy. }
    \label{figure:strategy}
\end{figure}
The context $\theta_t$ represent the price change for the stock. The reward $X_t$ is the financial gain (loss) deriving from trading a stock. Both variables are sampled from the following stochastic linear dynamical system, where its derivation is in the Appendix. 
\begin{equation}\label{eq:stock_model_dynamical_system}
    \begin{array}{ll}
        z_{t+1}' & = \begin{bmatrix}
            0 & 0 & 0 & 0 \\
            0 & 0 & 0 & 0 \\
            0 & 0 & 0.9512 & 0 \\
            0 & 0 & 0 & 0.6065 \\
        \end{bmatrix} z_t' + \xi_t \\
        \theta_t & =  \begin{bmatrix}
            -1 & 0 & 0.0353 & 0 \\
            0 & -1 & 0 & 0.2987
        \end{bmatrix} z_t' \\
        X_t & = \langle c_a, z_t' \rangle
    \end{array}, 
\end{equation}
\begin{align}
    \xi_t & \sim \mathcal{N}\left(0,\begin{bmatrix}
        0.9672 & 0 & 20.0957 & 0 \\
        0 & 0.6503 & 0 & 0.7536 \\
        20.0957 0 & & 1112.3 & 0 \\
        0 & 0.7536 & 0 & 4.0537
    \end{bmatrix}\right), \nonumber \\
    c_a & \in \{
        \begin{bmatrix}
            -1 & 0 & 0.0353 & 0
        \end{bmatrix},  \begin{bmatrix}
            0 & -1 & 0 & 0.2987
        \end{bmatrix}, \nonumber \\
        & \quad\quad\quad\quad\quad\quad\quad\quad\quad\quad\quad\quad\quad\quad \begin{bmatrix}
            0 & 0 & 0 & 0
        \end{bmatrix}\}. \nonumber 
\end{align}

For the model, the number of previous contexts $\theta_t$ that are used is $s = 10$ and the regularization parameter $\lambda = 10^{-1}$. Since $\Gamma$ is Schur, one could argue that the reward $X_t$ is just Gaussian distributed as $t >> 0$. Therefore, using \textbf{UCB} is a reasonable method to use as a comparison. The parameter used in \textbf{UCB} is $\delta = 0.1$. To provide an upper bound on the algorithm's performance we use an oracle that leverages optimal predicted awards generated by the Kalman filter $\hat{X}_{t|t-1}$ \eqref{eq:Kalman_Filter} to choose the action $c_a \in \mathcal{A}$. 

Figure \ref{figure:performance_1} contains the instanteneous regret (the top plot) and regret (the bottom plot) of the learner using \textbf{UCB} (the red line), algorithm \ref{alg:cap} (the blue line), and the oracle (the green line) averaged across 1,000 different simulations. Even though the system converges to the steady-state, \textbf{UCB} still performs sub-par compared to algorithm \ref{alg:cap}. Figure \ref{figure:performance_2} is a comparison of just algorithm \ref{alg:cap} and the oracle. It can be observed that algorithm \ref{alg:cap} instantenous regret is converging to the oracle's instanteneous regret. 

\begin{figure}[thpb]
    \centering
    \includegraphics[width=0.5\textwidth]{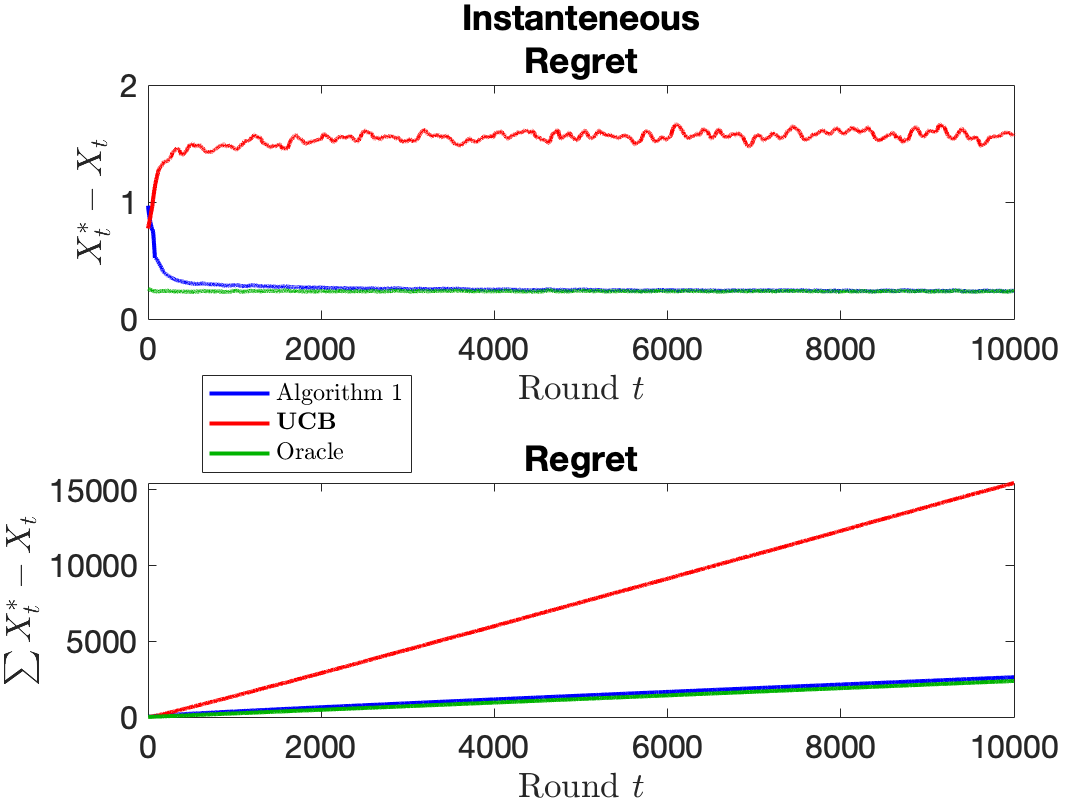}
    \caption{Figure for algorithm performance for \textbf{UCB}, algorithm \ref{alg:cap}, and the oracle. }
    \label{figure:performance_1}
\end{figure}
\begin{figure}[thpb]
    \centering
    \includegraphics[width=0.5\textwidth]{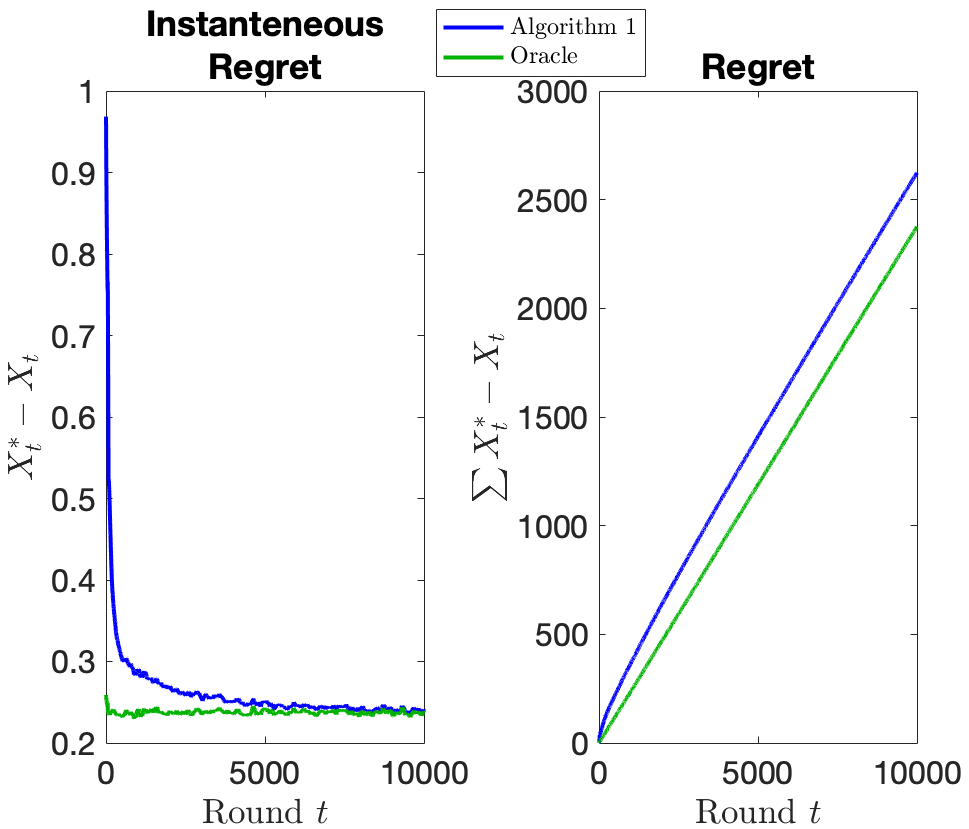}
    \caption{Figure for algorithm performance for algorithm \ref{alg:cap} and the oracle. }
    \label{figure:performance_2}
\end{figure}

\section{Conclusion}\label{sec:Conclusion}

This paper introduces a new variant of the stochastic multi-armed bandit where the rewards are sampled from an unknown stochastic linear dynamical system. To approach this problem, the learner first explores all the actions to learn the underlying linear model. Using the learned model, the learner uses the model to predict the action with the highest reward. Simulation results show how the proposed strategy yields near optimal actions after the learning phase. An application to high frequency trading is used to illustrate the methods and the results.

\bibliographystyle{IEEEtran}
\bibliography{IEEEabrv,autosam}{}

\begin{thebibliography}{10}
\providecommand{\url}[1]{#1}
\csname url@samestyle\endcsname
\providecommand{\newblock}{\relax}
\providecommand{\bibinfo}[2]{#2}
\providecommand{\BIBentrySTDinterwordspacing}{\spaceskip=0pt\relax}
\providecommand{\BIBentryALTinterwordstretchfactor}{4}
\providecommand{\BIBentryALTinterwordspacing}{\spaceskip=\fontdimen2\font plus
\BIBentryALTinterwordstretchfactor\fontdimen3\font minus
  \fontdimen4\font\relax}
\providecommand{\BIBforeignlanguage}[2]{{%
\expandafter\ifx\csname l@#1\endcsname\relax
\typeout{** WARNING: IEEEtran.bst: No hyphenation pattern has been}%
\typeout{** loaded for the language `#1'. Using the pattern for}%
\typeout{** the default language instead.}%
\else
\language=\csname l@#1\endcsname
\fi
#2}}
\providecommand{\BIBdecl}{\relax}
\BIBdecl

\bibitem{thompson1933likelihood}
W.~R. Thompson, ``On the likelihood that one unknown probability exceeds
  another in view of the evidence of two samples,'' \emph{Biometrika}, vol.~25,
  no. 3/4, pp. 285--294, 1933.

\bibitem{agrawal1995sample}
R.~Agrawal, ``Sample mean based index policies by o (log n) regret for the
  multi-armed bandit problem,'' \emph{Advances in Applied Probability},
  vol.~27, no.~4, pp. 1054--1078, 1995.

\bibitem{auer2002finite}
P.~Auer, N.~Cesa-Bianchi, and P.~Fischer, ``Finite-time analysis of the
  multiarmed bandit problem,'' \emph{Machine learning}, vol.~47, no.~2, pp.
  235--256, 2002.

\bibitem{li2017hyperband}
L.~Li, K.~Jamieson, G.~DeSalvo, A.~Rostamizadeh, and A.~Talwalkar, ``Hyperband:
  A novel bandit-based approach to hyperparameter optimization,'' \emph{The
  Journal of Machine Learning Research}, vol.~18, no.~1, pp. 6765--6816, 2017.

\bibitem{gagliolo2010algorithm}
M.~Gagliolo and J.~Schmidhuber, ``Algorithm selection as a bandit problem with
  unbounded losses,'' in \emph{International conference on learning and
  intelligent optimization}.\hskip 1em plus 0.5em minus 0.4em\relax Springer,
  2010, pp. 82--96.

\bibitem{mueller2019low}
J.~W. Mueller, V.~Syrgkanis, and M.~Taddy, ``Low-rank bandit methods for
  high-dimensional dynamic pricing,'' \emph{Advances in Neural Information
  Processing Systems}, vol.~32, 2019.

\bibitem{agrawal2021dynamic}
S.~Agrawal, S.~Yin, and A.~Zeevi, ``Dynamic pricing and learning under the bass
  model,'' in \emph{Proceedings of the 22nd ACM Conference on Economics and
  Computation}, 2021, pp. 2--3.

\bibitem{shen2015portfolio}
W.~Shen, J.~Wang, Y.-G. Jiang, and H.~Zha, ``Portfolio choices with orthogonal
  bandit learning,'' in \emph{Twenty-fourth international joint conference on
  artificial intelligence}, 2015.

\bibitem{huo2017risk}
X.~Huo and F.~Fu, ``Risk-aware multi-armed bandit problem with application to
  portfolio selection,'' \emph{Royal Society open science}, vol.~4, no.~11, p.
  171377, 2017.

\bibitem{shahriari2015taking}
B.~Shahriari, K.~Swersky, Z.~Wang, R.~P. Adams, and N.~De~Freitas, ``Taking the
  human out of the loop: A review of bayesian optimization,'' \emph{Proceedings
  of the IEEE}, vol. 104, no.~1, pp. 148--175, 2015.

\bibitem{merton1973intertemporal}
R.~C. Merton, ``An intertemporal capital asset pricing model,''
  \emph{Econometrica: Journal of the Econometric Society}, pp. 867--887, 1973.

\bibitem{fernholz1982stochastic}
R.~Fernholz and B.~Shay, ``Stochastic portfolio theory and stock market
  equilibrium,'' \emph{The Journal of Finance}, vol.~37, no.~2, pp. 615--624,
  1982.

\bibitem{cheung2019learning}
W.~C. Cheung, D.~Simchi-Levi, and R.~Zhu, ``Learning to optimize under
  non-stationarity,'' in \emph{The 22nd International Conference on Artificial
  Intelligence and Statistics}.\hskip 1em plus 0.5em minus 0.4em\relax PMLR,
  2019, pp. 1079--1087.

\bibitem{javanmard2017perishability}
A.~Javanmard, ``Perishability of data: dynamic pricing under
  varying-coefficient models,'' \emph{The Journal of Machine Learning
  Research}, vol.~18, no.~1, pp. 1714--1744, 2017.

\bibitem{qin2022non}
Y.~Qin, T.~Menara, S.~Oymak, S.~Ching, and F.~Pasqualetti, ``Non-stationary
  representation learning in sequential linear bandits,'' \emph{arXiv preprint
  arXiv:2201.04805}, 2022.

\bibitem{Sequential_Stochastic}
Y.~Qin, T.~Menara, S.~Oymak, and F.~Pasqueletti, ``Representation learning for
  stochastic sequential linear bandits,'' \emph{IEEE Conference on Decision and
  Control}, 2021.

\bibitem{garivier2008upper}
A.~Garivier and E.~Moulines, ``On upper-confidence bound policies for
  non-stationary bandit problems,'' \emph{arXiv preprint arXiv:0805.3415},
  2008.

\bibitem{hartland2007change}
C.~Hartland, N.~Baskiotis, S.~Gelly, M.~Sebag, and O.~Teytaud, ``Change point
  detection and meta-bandits for online learning in dynamic environments,'' in
  \emph{CAp 2007: 9{\`e} Conf{\'e}rence francophone sur l'apprentissage
  automatique}, 2007, pp. 237--250.

\bibitem{liu2018change}
F.~Liu, J.~Lee, and N.~Shroff, ``A change-detection based framework for
  piecewise-stationary multi-armed bandit problem,'' in \emph{Proceedings of
  the AAAI Conference on Artificial Intelligence}, vol.~32, no.~1, 2018.

\bibitem{cao2019nearly}
Y.~Cao, Z.~Wen, B.~Kveton, and Y.~Xie, ``Nearly optimal adaptive procedure with
  change detection for piecewise-stationary bandit,'' in \emph{The 22nd
  International Conference on Artificial Intelligence and Statistics}.\hskip
  1em plus 0.5em minus 0.4em\relax PMLR, 2019, pp. 418--427.

\bibitem{mellor2013thompson}
J.~Mellor and J.~Shapiro, ``Thompson sampling in switching environments with
  bayesian online change detection,'' in \emph{Artificial intelligence and
  statistics}.\hskip 1em plus 0.5em minus 0.4em\relax PMLR, 2013, pp. 442--450.

\bibitem{wei2021nonstationary}
L.~Wei and V.~Srivastava, ``Nonstationary stochastic multiarmed bandits: Ucb
  policies and minimax regret,'' \emph{arXiv preprint arXiv:2101.08980}, 2021.

\bibitem{besbes2014stochastic}
O.~Besbes, Y.~Gur, and A.~Zeevi, ``Stochastic multi-armed-bandit problem with
  non-stationary rewards,'' \emph{Advances in neural information processing
  systems}, vol.~27, pp. 199--207, 2014.

\bibitem{tsiamis2019finite}
A.~Tsiamis and G.~J. Pappas, ``Finite sample analysis of stochastic system
  identification,'' in \emph{2019 IEEE 58th Conference on Decision and Control
  (CDC)}.\hskip 1em plus 0.5em minus 0.4em\relax IEEE, 2019, pp. 3648--3654.

\bibitem{10.1145/1008328.1008329}
D.~E. Knuth, ``Big omicron and big omega and big theta,'' \emph{SIGACT News},
  vol.~8, no.~2, p. 18–24, Apr. 1976.

\bibitem{lattimore2020bandit}
T.~Lattimore and C.~Szepesv{\'a}ri, \emph{Bandit algorithms}.\hskip 1em plus
  0.5em minus 0.4em\relax Cambridge University Press, 2020.

\bibitem{wooldridge2010econometric}
J.~M. Wooldridge, \emph{Econometric analysis of cross section and panel
  data}.\hskip 1em plus 0.5em minus 0.4em\relax MIT press, 2010.

\bibitem{NIPS2011_e1d5be1c}
Y.~Abbasi-yadkori, D.~P\'{a}l, and C.~Szepesv\'{a}ri, ``Improved algorithms for
  linear stochastic bandits,'' in \emph{Advances in Neural Information
  Processing Systems}, J.~Shawe-Taylor, R.~Zemel, P.~Bartlett, F.~Pereira, and
  K.~Q. Weinberger, Eds., vol.~24.\hskip 1em plus 0.5em minus 0.4em\relax
  Curran Associates, Inc., 2011.

\bibitem{boucheron2013concentration}
S.~Boucheron, G.~Lugosi, and P.~Massart, \emph{Concentration inequalities: A
  nonasymptotic theory of independence}.\hskip 1em plus 0.5em minus 0.4em\relax
  Oxford university press, 2013.

\bibitem{gelb1974applied}
A.~Gelb \emph{et~al.}, \emph{Applied optimal estimation}.\hskip 1em plus 0.5em
  minus 0.4em\relax MIT press, 1974.

\bibitem{van1978computing}
C.~Van~Loan, ``Computing integrals involving the matrix exponential,''
  \emph{IEEE transactions on automatic control}, vol.~23, no.~3, pp. 395--404,
  1978.

\end{thebibliography}

\appendix 

\section{Derivation for \eqref{eq:stock_model_dynamical_system}}

The trader models the price evolution of stock $i$ (denoted as $S_{\tau}^{[i]}$) using the following stochastic differential equation for $\tau \in [0,T]$ based on \cite{merton1973intertemporal}.
\begin{equation}
    \begin{array}{ll}
        \frac{dS_{\tau}^{[i]}}{S_{\tau}^{[i]}} & = M_{\tau}^{[i]} d\tau + dW_{\tau}^{[i]} \\
        dM_\tau^{[i]} & = \kappa_{[i]}\Big(\frac{1}{2} - M_\tau^{[i]}\Big)d\tau + \sigma_{[i]}dV_\tau^{[i]}
    \end{array},
\end{equation}
where $\kappa_{[i]}$ and $\sigma_{[i]}$, $i = 1,2$, are defined to be
\begin{align}
    \begin{bmatrix}
        \kappa_{[1]} \\
        \kappa_{[2]}
    \end{bmatrix} & \triangleq \begin{bmatrix}
        10^{-1} \\
        1
    \end{bmatrix}, \nonumber \\
    \begin{bmatrix}
        \sigma_{[1]} \\
        \sigma_{[2]}
    \end{bmatrix} & \triangleq \begin{bmatrix}
        10 \\
        1
    \end{bmatrix}. 
\end{align}

The variable $M_\tau^{[i]}$ is the drift rate of stock $i$, $\kappa_{[i]}$ is the speed of reversion (the rate $M_{\tau}^{[i]}$ returns to its mean), and $\sigma_{[i]}$ sets the magnitude of $dV_\tau^{[i]}$. Both $dW_{\tau}^{[i]}$ and $dV_{\tau}^{[i]}$ are independent Gaussian distributed random variables with a variance of $1$ with no time correlation, i.e. 
$dW_{\tau}^{[i]}\sim \mathcal{N}(0,\delta_\tau)$ and $dV_{\tau}^{[i]}\sim \mathcal{N}(0,\delta_\tau)$ where $\delta_\tau$ is the delta dirac function. Let $Y_\tau^{[i]} = \log(S_{\tau}^{[i]})$. Using It\^o's lemma, the stochastic differential equation for $\log(S_{\tau}^{[i]})$ is 
\begin{align}
    d\log(S_{\tau}^{[i]}) & = \Bigg\langle \frac{\partial \log(S_{\tau}^{[i]})}{\partial S_{\tau}^{[i]}},S_{\tau}^{[i]} M_{\tau}^{[i]} \Bigg\rangle d\tau \nonumber \\
    & \quad \quad \quad \quad + \frac{1}{2}\Bigg\langle \frac{\partial^2 \log(S_{\tau}^{[i]})}{\partial (S_{\tau}^{[i]})^2} , (S_{\tau}^{[i]})^2\Bigg\rangle d\tau \nonumber \\
    & \quad \quad \quad \quad + \Bigg\langle \frac{\partial \log(S_{\tau}^{[i]})}{\partial S_{\tau}^{[i]}}, S_{\tau}^{[i]} \Bigg\rangle dW_{\tau}^{[i]}, \\
    d\log(S_{\tau}^{[i]}) & = \frac{1}{S_{\tau}^{[i]}} S_{\tau}^{[i]} M_{\tau}^{[i]}d\tau + \Bigg(\frac{1}{2}\Bigg)\Bigg(\frac{-1}{(S_{\tau}^{[i]})^2}\Bigg)(S_{\tau}^{[i]})^2 d\tau \nonumber \\
    & \quad \quad \quad \quad + \frac{1}{S_{\tau}^{[i]}} S_{\tau}^{[i]} dW_{\tau}^{[i]}, \\
    d\log(S_{\tau}^{[i]}) & = \Big(M_{\tau}^{[i]} - \frac{1}{2}\Big)d\tau + dW_{\tau}^{[i]}. 
\end{align}

This leads to the following stochastic differential equations:
\begin{equation}
    \begin{array}{ll}
        dY_{\tau}^{[i]} & = \Big(M_{\tau}^{[i]} - \frac{1}{2}\Big) d\tau + dW_{\tau}^{[i]} \\
        dM_\tau^{[i]} & = \kappa_{[i]}\Big(\frac{1}{2} - M_\tau^{[i]}\Big)d\tau + \sigma_{[i]}dV_\tau^{[i]}
    \end{array}.
\end{equation}

Let the following matrices and vectors be defined as below:
\begin{align}
    y(\tau) & \triangleq \begin{bmatrix} Y_{\tau}^{[1]} & Y_{\tau}^{[2]}\end{bmatrix}^\top, \nonumber\\
    m(\tau) & \triangleq \begin{bmatrix} M_{\tau}^{[1]} & M_{\tau}^{[2]}\end{bmatrix}^\top, \nonumber\\
    \kappa & \triangleq \begin{bmatrix}
        \kappa_{[1]} & 0 \\
        0 & \kappa_{[2]}
    \end{bmatrix}, \nonumber\\
    F & \triangleq \left[ \begin{array}{c|c}
      \mathbf{0} & I_2 \\
      \midrule
      \mathbf{0} & - \kappa \\
    \end{array}\right], \nonumber
\end{align}
\begin{align}    
    dw(\tau) & \triangleq  \begin{bmatrix}
        dW_{\tau} \\
        \sigma dV_{\tau}
    \end{bmatrix} \sim \mathcal{N}(0,\Sigma \delta_\tau), \nonumber\\
    \Sigma & \triangleq \begin{bmatrix}
        1 & 0 & 0 & 0 \\
        0 & 1 & 0 & 0 \\
        0 & 0 & \sigma_{[1]}^2 & 0\\
        0 & 0 & 0 & \sigma_{[2]}^2
    \end{bmatrix} , \nonumber\\
    B_\mu & \triangleq \begin{bmatrix}
        -1/2 \\
        -1/2 \\
        \kappa_{[1]}/2\\
        \kappa_{[2]}/2
    \end{bmatrix}.
\end{align}

This provides the stochastic linear dynamical system 
\begin{equation}\label{eq:system_continuous}
    \begin{bmatrix}
        dy(\tau) \\
        dm(\tau)
    \end{bmatrix} = 
    \Big(F \begin{bmatrix}
        y(\tau) \\
        m(\tau)
    \end{bmatrix} + B_\mu\Big) dt + dw(\tau). 
\end{equation}

System \eqref{eq:system_continuous} is discretized with intervals of size $\Delta T$ ($n\Delta T = T$) which gives the following discrete-time stochastic linear dynamical system 
\begin{multline} \label{eq:discretized_system}
    \begin{bmatrix}
        y(t\Delta T + \Delta T) \\
        m(t\Delta T + \Delta T)
    \end{bmatrix} = \exp(F\Delta T)\begin{bmatrix}
        y(t\Delta T) \\
        m(t\Delta T)
    \end{bmatrix} \\ + \Delta B_\mu + \Delta w(t \Delta T) ,  
\end{multline}
where $\Delta B_\mu$ and $\Delta w(t\Delta T)$ are defined below. 
\begin{align}
    \Delta B_\mu & \triangleq \sum_{i = 0}^{\infty} F^i B_\mu \frac{\Delta T^{i+1}}{(i+1)!}, \label{eq:approximate_mu} \\
    \Delta w(t\Delta T) & \sim \mathcal{N}(0,\Xi), \nonumber \\
    \Xi & = \int_{t\Delta T}^{(t+1)\Delta T} e^{F((t+1)\Delta T-\tau)}S e^{F^\top(t+1)\Delta T-\tau)} d\tau. \label{eq:integral_S}
\end{align}

Equations \eqref{eq:approximate_mu} and \eqref{eq:integral_S} are from \cite{gelb1974applied}. Evaluating \eqref{eq:integral_S} is analytically intractable; therefore, \cite{van1978computing} is used to approximate $\Xi$. 
\begin{align}
    \left[ \begin{array}{c|c}
      \Phi_{1,1} & \Phi_{1,2} \\
      \midrule
      \mathbf{0} & \Phi_{2,2} \\
    \end{array}\right] & = \exp\left(\left[ \begin{array}{c|c}
      -F^\top & \Sigma \\
      \midrule
      \mathbf{0} & F \\
    \end{array}\right]\Delta T\right), \nonumber \\
    \Phi_{2,2}^\top \Phi_{1,2} & \approx \Xi. \label{eq:approximate_Q}
\end{align}

Say that the trader uses the following strategy: the trader buys stock $i$ at the start of time $t\Delta T - \Delta T$ and then sells that stock at time $t\Delta T$. Define reward $X_t$ for this time period to be
\begin{equation}
    X_t \triangleq Y_{t\Delta T}^{[i]} - Y_{(t-1)\Delta T}^{[i]} = \log\Big(\frac{S_{t\Delta T}^{[i]}}{S_{(t-1)\Delta T}^{[i]}}\Big). 
\end{equation}

Therefore, the difference $Y_{t\Delta T}^{[i]} - Y_{(t-1)\Delta T}^{[i]}$ is the logarithm of the percentage increase/decrease of buying at $t\Delta T- \Delta T$ and then selling at $t\Delta T $. We extend \eqref{eq:discretized_system} by using the following matrices and vectors:
\begin{align}
    z_t & \triangleq \begin{bmatrix}
        y(t\Delta T) \\
        m(t\Delta T) \\
        y(t\Delta T - \Delta T)
    \end{bmatrix} \in \mathbb{R}^{6}, \nonumber\\
    \Gamma & \triangleq 
     \left[ \begin{array}{c|c}
      F & \mathbf{0} \\
      \midrule
      I_2 & \mathbf{0}
    \end{array}\right], \nonumber \\
    \mu_\xi & \triangleq \begin{bmatrix}
        \Delta B_\mu^\top & \mathbf{0}^\top
    \end{bmatrix}^\top, \nonumber\\
    \xi_t & \sim \mathcal{N}(\mu_\xi,Q), \nonumber\\
    Q & \triangleq \left[ \begin{array}{c|c}
      \Phi_{2,2}^\top \Phi_{1,2} & \mathbf{0}\\
      \midrule
      \mathbf{0} & \mathbf{0} \\
    \end{array}\right], \nonumber\\
    C_\theta & \triangleq \begin{bmatrix}
        1 & 0 & 0 & 0 & -1 & 0 \\
        0 & 1 & 0 & 0 & 0 & -1
    \end{bmatrix}, \nonumber\\
    c_a & \in \{\begin{bmatrix}
        1 & 0 & 0 & 0 & -1 & 0
    \end{bmatrix}^\top, \nonumber \\
    & \quad\quad\quad\quad \begin{bmatrix}
        0 & 1 & 0 & 0 & 0 & -1
    \end{bmatrix}^\top, \nonumber \\
    & \quad\quad\quad\quad \begin{bmatrix}
        0 & 0 & 0 & 0 & 0 & 0
    \end{bmatrix}^\top\}.
\end{align}

Finally, the eigenvectors $\begin{bmatrix}
    U & U'
\end{bmatrix}$ of $\Gamma$ are computed, where $(C_\theta U,U^{-1}\Gamma U)$ is observable and $U^{-1}\Gamma U$ is Schur. This transformation provides \eqref{eq:stock_model_dynamical_system}.

\balance
\end{document}